\documentclass{article}
\usepackage{iclr2015} 
\usepackage{amsmath}
\usepackage{amsthm}
\usepackage{amsfonts}
\usepackage{times}
\usepackage{graphicx} 
\usepackage{subfigure} 
\usepackage{natbib}

\newcommand{\real}{\mathbb{R}}
\newtheorem{theorem}{Theorem}
\newtheorem{lemma}{Lemma}

\title{Crypto-Nets: Neural Networks over Encrypted Data}
\author{Pengtao Xie\thanks{The work was done while this author was visiting Microsoft.} \\
School of Computer Science\\
Carnegie Mellon University \\
Pittsburgh, USA\\
\And
Misha Bilenko \& Tom Finley   \\
Cloud Computing Division \\
Microsoft\\
Redmond, USA\\
\And
Ran Gilad-Bachrach \& Kristin Lauter \& Michael Naehrig \\
Microsoft Research\\
Microsoft\\
Redmond, USA
}

\iclrconference

\begin{document} 

\maketitle

\begin{abstract}
The problem we address is the following: how can a user employ a predictive model that is held by a third party, without compromising private information. For example, a hospital may wish to use a cloud service to predict the readmission risk of a patient. However, due to regulations, the patient's medical files cannot be revealed. The goal is to make an inference using the model, without jeopardizing the accuracy of the prediction or the privacy of the data. 

To achieve high accuracy, we use neural networks, which have been shown to outperform other learning models for many tasks. To achieve the privacy requirements, we use homomorphic encryption in the following protocol: the data owner encrypts the data and sends the ciphertexts to the third party to obtain a prediction from a trained model. The model operates on these ciphertexts and sends back the encrypted prediction. In this protocol, not only the data remains private, even the values predicted are available only to the data owner.

Using homomorphic encryption and modifications to the activation functions and training algorithms of neural networks, we present crypto-nets and prove that they can be constructed and may be feasible. This method paves the way to build a secure cloud-based neural network prediction services without invading users' privacy.
\end{abstract}

\section{Introduction}

Recently, many efforts have been devoted to cloud machine learning (CML), where machine learning (ML) services are running on commercial providers' infrastructure. Examples include Microsoft Azure Machine Learning\footnote{http://azure.microsoft.com/en-us/services/machine-learning/}, Google Prediction API\footnote{https://developers.google.com/prediction/},  GraphLab\footnote{http://graphlab.com/} and Ersatz Labs\footnote{http://www.ersatzlabs.com/}, to name a few. CML allows training and deploying models on cloud servers. Once deployed users can use these models to make predictions without having to worry about maintaining the service and the models. Moreover, it allows the model owner to be paid for every prediction being made by the model. In a broader sense, it enables a model of Machine Learning as a Service (MLaaS), where there is a separation between the data owner, the model owner and the compute provider (the cloud).

Despite the attractive benefits provided by MLaaS, it suffers from a severe problem, namely the invasion of the security and privacy of users' data. Traditional ML solutions require access to the raw data, which creates a potential security and privacy risk. In some cases, for example that of medical data, regulations may make these usage patterns illegal.  Therefore, the goal of this work is to enable data owners to use MLaaS without exposing their data. 

This problem has been addressed before by \cite{graepel2013ml}. They proposed to perform machine learning on encrypted data utilizing  homomorphic encryption. A homomorphic encryption scheme \citep{rivest1978data} allows a certain computation to be performed on the encrypted data by manipulating the corresponding ciphertexts without the need to decrypt them first. A fully homomorphic encryption scheme \citep{gentry2009fully} allows arbitrary operations over encrypted data and therefore, any function can be computed. However, fully homomorphic encryption schemes are still too inefficient for practical use. One way to obtain better efficiency is to only use so-called somewhat homomorphic schemes that only allow the evaluation of functions up to a certain complexity.  Such schemes are often the cores of corresponding fully homomorphic encryption schemes. They usually provide operations corresponding to addition and multiplication of encrypted integer values, and therefore, are suitable to evaluate polynomial functions up to a certain maximal degree. The required degree of the polynomial function along with the desired security level determines the scheme parameters and thus has great implications on the size of the ciphertext as well as the computational complexity of the cryptographic operations. Therefore, \cite{graepel2013ml} suggested using linear or other low degree models. While this method preserves the privacy and security of the data, it does not allow for highly accurate predictions since linear models cannot compete with the state-of-the-art in terms of accuracies on problems such as object recognition in image or speech data.

\begin{figure}
\begin{center}
\includegraphics[width=0.9\columnwidth]{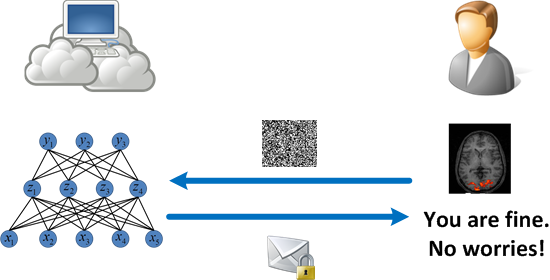}
\caption{Privacy preserving neural network prediction on encrypted data. The cloud holds a trained neural network model which provides a prediction service. To protect their privacy, users encrypt their data and upload the ciphertexts to the cloud. The cloud makes neural network predictions over the encrypted data and obtains results in encrypted form. The cloud sends the encrypted prediction results back to the users. Users  locally decrypt and get the plaintext results. The cloud never learns any information about the users' data and the prediction results, while the results are guaranteed to be correct.}
\end{center}
\label{fig:nnpe}
\end{figure}
In this paper, we investigate how to perform neural network prediction on encrypted data. A neural network is a nonlinear machine learning model with large model capacity. It has achieved great success in speech recognition, image classification and natural language processing. Figure \ref{fig:nnpe} illustrates the scenario of making secure predictions on encrypted data with a neural network. On the cloud side, there is a neural network model trained on plaintext data. For example, let us assume that the trained neural network takes medical images and predicts the likelihood of a pathology (disease). A user possesses a medical image and wants to use the neural network model in the cloud to predict whether he has the disease. Meanwhile, the user does not want the image to be seen by the cloud, because it may leak his health conditions. The user encrypts the image into a ciphertext and sends the ciphertext to the cloud. The cloud service evaluates the neural network prediction by operating on the ciphertext only and produces a prediction result in encrypted form that the cloud cannot decipher. The encrypted result is sent back to the user, who decrypts locally and retrieves the result as readable plaintext. 
In this process, both the input image and the output prediction are held in encrypted form. The cloud does not learn any information about the users' input data and the prediction result. Thereby, confidentiality of the user's data and prediction results are guaranteed. 

The main challenge in realizing this solution is the fact that the commonly used activation functions in neural networks are not in polynomial form. This includes functions such as the sigmoid and rectified linear functions. We first show that from theoretical point of view, since these functions are continuous, they can be approximated by polynomials and therefore, the entire computation can be thought of as applying a polynomial to the data. We also discuss ways to minimize the degree of these polynomials such that the time to compute will remain feasible. We call this type of neural networks crypto-nets.

\section{Related Work}
Using Homomorphic Encryption (HE) to do machine learning and statistical analysis on encrypted data has been investigated in \citep{bos2014private, bostmachine2014,graepel2013ml, lauter2014,nikolaenko2013privacy,nikolaenko2013privacy2,wu2012using}. These works have studied how to do HE-based privacy-preserving training or prediction of linear regression \citep{nikolaenko2013privacy2,wu2012using}, linear classifiers \citep{bos2014private,bostmachine2014,graepel2013ml}, decision trees \citep{bostmachine2014}, matrix factorization \citep{nikolaenko2013privacy}. As far as we know, ours is the first work to show how to apply neural networks to encrypted data and therefore allow the use of models that have been shown to be very accurate. 

\cite{orlandi2007oblivious} suggested a scheme for using homomorphic encryption with neural networks. They suggest solving the problem of non-linear activation functions by creating an interactive protocol between the data owner and the model owner. In a nut-shell, every non-linear transformation is computed by the data-owner: the model sends the input to the non-linear transformation in encrypted form to the data owner who decrypts the message, applies the transformation, encrypts the result and sends it back. Unfortunately, this interaction requires large latencies and increases the complexity on the data owner side, effectively making it impractical. Moreover, it leaks information about the model. Therefore, \cite{orlandi2007oblivious} had to introduce safety mechanisms, such as random order of execution, to mitigate this issue.  In comparison, the procedure we introduce does not require complicated communication schemes: the data owner encrypts the data and sends it. The model does its computation and sends back the (encrypted) prediction. Therefore, it allows for asynchronous communication and it does not leak unnecessary information about the model. 

Another line of work focuses on differential privacy \citep{chaudhuri2011differentially, Duchi2012privacy, dwork2008differential, smith2011privacy, wasserman2010statistical}. Differential privacy aims at allowing to gather statistics from a database without revealing information about individual records. However, this method is not suitable for privacy-preserving prediction  since by its nature, in the inference phase, a single record is being used and therefore fully exposed. Moreover, the method proposed here provides a  much higher level of security. For example, not only the row records are not exposed, even the predicted value is not accessible to any party except the data owner since it is encrypted, not even to the cloud service that computed it, since it is encrypted.

\section{Homomorphic Encryption}
\label{sec:he}

A Homomorphic Encryption (HE) scheme \citep{rivest1978data} preserves some structure of the original message space. Here, we assume that it provides methods to add and multiply encrypted messages and therefore preserves the message space ring structure. We also assume that it can be used to operate on the ring of integers. In that case, messages are integers and the scheme preserves the ability to perform additions and multiplications of such integers.  

For our purpose, a (secret key) HE scheme consists of four algorithms: encryption ($E$), decryption ($D$), addition ($\oplus$) and multiplication ($\otimes$). The encryption algorithm takes as input a message and a secret key $k$. We denote the dependence on the key by $E_k$, but will drop the subscript later when use is clear from the context. The decryption takes as input an element from the ciphertext space and a key, while the algorithms $\oplus$ and $\otimes$ do not depend on the secret key and only take two ciphertexts as input.  Let $m_1$ and $m_2$ be integer messages and let $k$ be a secret key.  Then the above algorithms have the following properties: 
\begin{enumerate}
\item Given $E_k(m_1)$, it is computationally infeasible to compute $m_1$ without the private key $k$.
\item It holds that $m_1 = D_k(E_k(m_1))$.
\item It holds that $m_1 + m_2\ = D_k\left(E_k\left(m_1\right) \oplus E_k\left(m_2\right)\right)$.
\item It holds that $m_1 \times m_2 = D_k\left(E_k\left(m_1\right) \otimes E_k\left(m_2\right)\right)$.
\item The algorithms $\oplus$ and $\otimes$ do not use the secret key used for encryption.
\end{enumerate}

Furthermore, we require that the scheme can evaluate the algorithms $\otimes$ and $\oplus$ repeatedly for a certain number of times, while decryption still gives the correct result. More precisely, let $P$ be a polynomial on $n$ variables of degree at most $d$. Denote by $\tilde{P}$ the function on input $n$ ciphertexts, which is given by replacing each addition in $P$ by the algorithm $\oplus$ and each multiplication by $\otimes$.  Let $m_1,\dots,m_n$ be messages. Then the above algorithms satisfy the following property:
$$
P(m_1,\dots,m_n) = D(\tilde{P}(E(m_1),\dots,E(m_n))).
$$
This means that our HE scheme allows to compute any degree-bounded polynomial function $P$ as above over encrypted messages without decrypting them first. 

\cite{gentry2009fully} was the first to show that it is possible to construct a Fully Homomorphic Encryption (FHE) scheme, which means that there is no limit on the degree of the polynomial $P$ above. In theory, this allows to evaluate arbitrary computations (since any computation can be written as a binary polynomial in terms of binary addition and multiplication on the single bits of the input). Even though there has been great progress in making FHE schemes more efficient and secure (see, for example, \cite{brakerski2014efficient}), this approach is currently not feasible for practical applications. Efficiency can be increased by restricting to somewhat homomorphic schemes and by operating on integers instead of bits, see \cite{naehrig2011ca}. With this approach, both the computational complexity and the length of ciphertexts increase with the number of desired operations performed on the encrypted data in order to guarantee correct decryption after polynomial evaluation. While this increase is benign when increasing the number of additions, it is more significant when adding multiplications. Thus, a solution that builds upon these encryption schemes has to be restricted to computing low degree polynomials.

\section{Polynomial Approximation to Neural Networks}
\label{sec:poly}

From the discussion above, in Section~\ref{sec:he} we conclude that certain polynomial functions can be computed over encrypted data given that their degree is not too large. However, activation functions such as sigmoids and rectified-linear functions are not polynomials and the same applies to other, commonly used non-linear transformations in neural-networks such as max pooling. Nevertheless, since all these functions are continuous, the results, that is the neural net, viewed as a function, is a continuous function. If the domain, that is the input space, is a compact set, then from the Stone-Weierstrass theorem \citep{Stone1948} it follows that it can be approximated uniformly by polynomials. We will begin the discussion with the inference case, therefore we assume that the neural network has already been trained and the goal is to apply it to encrypted data.

\begin{lemma}\label{lemma:polyApprox}
Let $N$ be a neural network in which all non-linear transformations are continuous. Let $X\subset\real^n$ be the domain on which $N$ acts and assume that $X$ is compact, then for every $\epsilon>0$ there exists a polynomial $P$ such that 
$$ \sup_{x\in X}\left\Vert N(x)-P(x)\right\Vert < \epsilon~~~.$$ 
\end{lemma}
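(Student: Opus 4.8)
The plan is to show that $N$ is a continuous function and then invoke the Stone--Weierstrass theorem, which the lemma statement already flags as the intended tool. First I would observe that a neural network computes its output by alternately applying affine maps (the weighted sums at each layer, which are given by the learned weights and biases) and the non-linear transformations (activation functions, and possibly pooling operations). Affine maps are continuous, and by hypothesis every non-linear transformation in $N$ is continuous. Since $N(x)$ is obtained as a finite composition of these continuous maps, $N$ is itself continuous as a function from $X \subset \real^n$ to some $\real^m$. This reduces the problem to approximating an arbitrary continuous function on a compact set.

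Next I would apply the Stone--Weierstrass theorem. The cleanest route is to treat the output componentwise: write $N = (N_1,\dots,N_m)$ where each $N_j \colon X \to \real$ is a continuous real-valued function on the compact set $X$. The polynomials in the $n$ coordinate variables form a subalgebra of $C(X)$ that contains the constants and separates points of $X$, so by Stone--Weierstrass this subalgebra is dense in $C(X)$ with respect to the uniform norm. Hence for each $j$ and any tolerance there is a polynomial $P_j$ approximating $N_j$ uniformly.

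Finally I would assemble the componentwise approximations into a single polynomial map $P = (P_1,\dots,P_m)$ and control the vector norm. Given $\epsilon > 0$, I would choose each $P_j$ so that $\sup_{x\in X} |N_j(x) - P_j(x)| < \epsilon/\sqrt{m}$ (or $\epsilon/m$, depending on which norm $\|\cdot\|$ denotes); then $\|N(x) - P(x)\| < \epsilon$ uniformly on $X$ by combining the coordinate bounds, which finishes the proof.

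I do not anticipate a genuine obstacle here, since the result is essentially an immediate corollary of Stone--Weierstrass once continuity is established. The only point requiring minor care is the bookkeeping in reducing a vector-valued approximation to the scalar case and in choosing the per-coordinate tolerances so that the combined error stays below $\epsilon$; the compactness of $X$ is what makes the uniform (sup-norm) statement meaningful and is used exactly once, in the application of Stone--Weierstrass. It is worth noting that this lemma is purely existential and says nothing about the \emph{degree} of $P$, which is the quantity that actually matters for homomorphic evaluation; controlling that degree is a separate issue deferred to later in the paper.
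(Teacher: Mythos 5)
Your proposal is correct and follows essentially the same route as the paper: establish continuity of $N$ as a composition of continuous maps, then invoke the Stone--Weierstrass theorem on the compact domain $X$. The only difference is that you spell out the componentwise reduction for vector-valued outputs (choosing per-coordinate tolerances like $\epsilon/\sqrt{m}$), a bookkeeping detail the paper's proof glosses over by applying Stone--Weierstrass to $N$ directly.
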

\begin{proof}
The function $N$ is constructed by compositions, additions and multiplications over the inputs and the non-linear transformations. Since compositions, additions and multiplications of continuous functions are continuous, the function $N$ is continuous. Since $N$ is a continuous function over a compact space and since the set of polynomials is an Algebra that separates points it follows from the Stone-Weierstrass theorem \citep{Stone1948} that there exists a polynomial $P$ such that $$ \sup_{x\in X}\left\Vert N(x)-P(x)\right\Vert < \epsilon~~~.$$ 
\end{proof}

Note that the assumption that the non-linearity is continuous is very mild since the back propagation algorithm used for learning neural networks assumes the existence of a gradient or a sub-gradient to these functions which implies continuity.

\begin{theorem}
Let $(E,D)$ be the encryption and decryption functions of a HE system. Let $N$ be a neural network in which all non-linear transformations are continuous. Let $X\subset\real^n$ be the domain on which $N$ acts and assume that $X$ is compact, then for every $\epsilon>0$ there exists a function $N^\prime$ such that 
$$ \sup_{x\in X}\left\Vert N(x) - D\left(N^\prime\left(E(x)\right)\right)\right\Vert < \epsilon$$
\end{theorem}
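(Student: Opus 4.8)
The plan is to combine the polynomial approximation guaranteed by Lemma~\ref{lemma:polyApprox} with the degree-bounded homomorphic evaluation property of the encryption scheme stated in Section~\ref{sec:he}. The key observation is that the HE scheme already gives us, for any polynomial $P$, a function $\tilde{P}$ acting on ciphertexts such that $P(m_1,\dots,m_n) = D(\tilde{P}(E(m_1),\dots,E(m_n)))$. So once we have a polynomial $P$ that approximates $N$ to within $\epsilon$, the natural candidate is simply $N^\prime = \tilde{P}$, the homomorphic counterpart of $P$.

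First I would apply Lemma~\ref{lemma:polyApprox} to the neural network $N$ on the compact domain $X$: for the given $\epsilon>0$ it produces a polynomial $P$ with $\sup_{x\in X}\lVert N(x)-P(x)\rVert < \epsilon$. Second, I would invoke the homomorphic property of $(E,D)$ to define $N^\prime = \tilde{P}$, obtained from $P$ by replacing every addition with $\oplus$ and every multiplication with $\otimes$. The HE correctness guarantee then yields the exact identity $P(x) = D(\tilde{P}(E(x)))$ for every $x$ in $X$ (applying $E$ and $D$ coordinatewise to the input vector). Third, I would substitute this identity into the approximation bound: since $D(N^\prime(E(x))) = D(\tilde{P}(E(x))) = P(x)$, the quantity $\lVert N(x) - D(N^\prime(E(x)))\rVert$ is literally equal to $\lVert N(x) - P(x)\rVert$, and taking the supremum over $x\in X$ gives the desired bound $<\epsilon$.

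The proof is therefore essentially a one-line reduction once the bookkeeping is in place, and the main subtlety I expect is not an obstacle in the hard-mathematics sense but a matter of matching the abstract setup to the hypotheses. Specifically, one should note that the HE scheme in Section~\ref{sec:he} operates on integer messages, whereas $X\subset\real^n$ and $N$ produces real-valued outputs; a fully rigorous argument would either assume an idealized scheme over $\real$ or absorb the quantization/scaling of reals to integers into an additional approximation error that can be folded into $\epsilon$. The other point worth stating explicitly is that the degree bound $d$ of the HE scheme must accommodate the degree of the approximating polynomial $P$; Lemma~\ref{lemma:polyApprox} gives \emph{existence} of $P$ but does not control its degree, so the theorem is best read as asserting feasibility in principle, with the degree-versus-accuracy trade-off being exactly the practical difficulty the rest of the paper addresses. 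Given these caveats, I would keep the proof short and emphasize the clean substitution step.
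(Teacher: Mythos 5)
Your proposal is correct and follows essentially the same route as the paper's own proof: invoke Lemma~\ref{lemma:polyApprox} to obtain the approximating polynomial $P$, then build $N^\prime$ by replacing additions and multiplications with $\oplus$ and $\otimes$ (the paper additionally notes that the polynomial's constants are replaced by their encryptions, which your $\tilde{P}$ formalism subsumes), and conclude via the HE correctness identity $P(x)=D(N^\prime(E(x)))$. Your closing caveats about integer message spaces and the uncontrolled degree of $P$ are sound observations that the paper defers to Section~\ref{sec:practical} rather than addressing in the proof itself.
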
\label{theorem:inference}
Note that for a vector $x=(x_1,\ldots,x_n)$ we use the notation $E(x)=\left(E(x_1),\ldots,E(x_n)\right)$ and $D(x)=\left(D(x_1),\ldots,D(x_n)\right)$
\begin{proof}
From Lemma~\ref{lemma:polyApprox} it follows that there exists a polynomial $P$ such that $\sup_{x\in X}\left\Vert N(x)-P(x)\right\Vert < \epsilon$. $N^\prime$ can be constructed from $P$ by replacing the addition and multiplications by the appropriate HE functions ($\oplus,\otimes$) and by replacing the constants in the polynomials by the encrypted versions of these constants. This can be done by accessing only the public encryption function $E$.
\end{proof}

Theorem~\ref{theorem:learning-polynomial} shows that an existing neural network can be applied to encrypted data. This is done by a two stage process: first the network is approximated by a polynomial and next this polynomial is "encrypted". Next we look at the learning process. The common way to learn a neural network is using back-propagation. This is a gradient descent type algorithm. That requires computing the derivative of the neural network with respect to the weights. If the neural network is a polynomial function (or is approximated by one) then the derivatives are polynomials as well and hence can be computed over encrypted data. However, some further restrictions are needed in some cases.

\begin{theorem}\label{theorem:learning-polynomial}
Fix the topology of a neural network and assume that all the non-linear transformations and the loss function are polynomials. Then the back propagation algorithm can be converted to work on encrypted data such that it will learn the encrypted version of the coefficients that the back propagation will learn on plain data.
\end{theorem}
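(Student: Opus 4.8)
The plan is to argue by induction on the number of back-propagation iterations that each step of the algorithm is a polynomial map in the quantities being manipulated, so that it can be simulated by the homomorphic operations $\oplus$ and $\otimes$ together with encrypted constants, exactly as in the proof of Theorem~\ref{theorem:inference}. First I would make precise what a single iteration computes: given the current weight vector $w_t$, a training example $(x,y)$, and a learning rate $\eta$, the forward pass evaluates the network layer by layer, the loss is evaluated on the output and the target $y$, the backward pass computes the gradient $\nabla_w L$ by the chain rule, and the weights are updated by $w_{t+1} = w_t - \eta\,\nabla_w L$. Since this produces $w_{t+1}$ from $w_t$, it suffices to show that the map $w_t \mapsto w_{t+1}$ is a polynomial in $w_t$ and in the fixed training data.

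The key structural observation is that under the hypotheses every piece of this computation is polynomial. Each layer applies an affine map followed by a polynomial activation, so the forward pass is a polynomial in $w$ and $x$; the loss is a polynomial in the output and in $y$, hence in $w$, $x$, $y$. Because the derivative of a polynomial is again a polynomial and the chain rule combines partial derivatives only through sums and products, every entry of $\nabla_w L$ is a polynomial in $w$, $x$, $y$, and the update $w - \eta\,\nabla_w L$ adds only a subtraction and a multiplication by the scalar $\eta$. Consequently one full iteration is a fixed polynomial map $Q$, and I would then define the encrypted algorithm by replacing $+$ with $\oplus$, $\times$ with $\otimes$, and every constant appearing in $Q$ (the learning rate and the initial weights) by its encryption, which requires only the public function $E$.

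With $Q$ identified as a polynomial, correctness follows by induction. Initialize ciphertexts $c_0 = E(w_0)$ for the same initial weights $w_0$ used in plaintext training, and at each step set $c_{t+1} = \tilde{Q}(c_t, E(x), E(y))$. The homomorphic property from Section~\ref{sec:he}, namely $P(m_1,\dots,m_n) = D(\tilde{P}(E(m_1),\dots,E(m_n)))$, yields $D(c_{t+1}) = Q(D(c_t), x, y)$, so if $D(c_t) = w_t$ then $D(c_{t+1}) = w_{t+1}$. Hence after any fixed number $T$ of iterations $D(c_T) = w_T$, i.e.\ the ciphertexts decrypt exactly to the coefficients that plaintext back-propagation would learn.

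The main obstacle, and the reason the statement carries the caveat about further restrictions, is the weight-update step over a ring of integers. The learning rate $\eta$ is normally a fraction and the gradient-descent iterates are generally non-integral, whereas the homomorphic operations act on integers; I would address this by adopting a fixed-point encoding and either choosing $\eta$ together with a global scaling so that all intermediate values stay integral, or clearing denominators throughout, at the cost of inflating the magnitude of the encoded numbers. A secondary difficulty is that the polynomial degree of the $T$-fold iterated map grows with $T$, so a somewhat homomorphic scheme bounds the admissible number of iterations while a fully homomorphic scheme is needed for unbounded training; since the theorem fixes the topology and implicitly the number of steps, bounding the degree by that of the iterated map suffices for the claim.
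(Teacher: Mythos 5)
Your proof is correct and takes essentially the same route as the paper's own (two-sentence) proof, which simply observes that a polynomial network with a polynomial loss has a polynomial gradient, and a polynomial can be evaluated over encrypted data via $\oplus$ and $\otimes$. Your version fills in the details the paper leaves implicit---the per-iteration polynomial update map, the induction over gradient steps via the homomorphic evaluation property, and the integer-encoding and degree-growth caveats, the latter of which the paper defers to Section~\ref{sec:practical} rather than treating in the proof.
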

\begin{proof} Since all transformations are polynomials then the function that the neural network computes is a polynomial. Since the loss function is polynomial as well it implies that the gradient is a polynomial too and therefore it can be computed over encrypted data.
\end{proof}

Theorem~\ref{theorem:learning-polynomial} suggests the following method for learning with encrypted data: first approximate all non-linear transformations with polynomials which will result in a polynomial network that can be learned exactly even when the data is encrypted. Note, however that when learning over encrypted data the results, that is the weights, are encrypted and if the learning algorithm does not have access to the secret key for use in the decryption function $D$ it will not be able to know what these coefficients are.

Another approach for learning with encrypted data is to approximate the back-propagation step with polynomials as illustrated by the following theorem.

\begin{theorem}
Assume that the domain of the network $X\subset\real^n$ is compact. Assume that the non-linear transformations and the loss function have continuous derivatives. Let $L$ be the back-propagation learning algorithm that maps a sample of size $T$ to the weight vector of the neural net. For every $\epsilon>0$ there exists a learning algorithm $L^\prime$ such that if $L$ learns the weights $w_1,\ldots,w_k$ from the sample $S$ of size $T$ then $L^\prime$ learns the weights $E(w^\prime_1),\ldots,E(w^\prime_k)$ form the sample $E(S)$ such that $\forall j,~~\vert w_j - w^\prime_j \vert < \epsilon$.
\end{theorem}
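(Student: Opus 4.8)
The plan is to view the entire back-propagation procedure, run for a fixed number of iterations, as a finite composition of two kinds of elementary maps: arithmetic operations (additions, multiplications, and scaling by the learning rate), which are already polynomial, and pointwise applications of the four fixed functions that carry the non-polynomial content of the algorithm, namely the non-linear activations $\phi$, their derivatives $\phi'$, the loss $\ell$, and its derivative $\ell'$. By hypothesis the activations and the loss have continuous derivatives, so all four of these functions are continuous. The map $L$ sending a sample to the learned weight vector is therefore a composition of finitely many continuous maps, hence continuous, and the construction of $L^\prime$ will proceed by replacing each of the four functions with a polynomial that approximates it uniformly on a suitable compact set, exactly as in Lemma~\ref{lemma:polyApprox}, and then replacing every arithmetic operation by the homomorphic operations $\oplus$ and $\otimes$.

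First I would pin down a single compact set on which the approximations must be good. Since the input domain $X$ is compact, the sample space $X^T$ is compact, the initial weights and hyperparameters are fixed, and the number of iterations is a fixed finite number, the set of all intermediate quantities --- every activation, every gradient, and every weight produced at every iteration, over all samples in $X^T$ --- is the continuous image of the compact set $X^T$ and is therefore contained in some compact set $K$. On $K$ (and on the compact ranges on which $\phi'$, $\ell$, $\ell'$ are evaluated) I would invoke Lemma~\ref{lemma:polyApprox} to obtain polynomials $\tilde\phi$, $\widetilde{\phi'}$, $\tilde\ell$, $\widetilde{\ell'}$ that are uniformly within $\delta$ of $\phi$, $\phi'$, $\ell$, $\ell'$ respectively, where $\delta>0$ is a tolerance to be fixed at the end. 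Substituting these polynomials into the back-propagation recursion yields an algorithm whose output weights $w^\prime_1,\dots,w^\prime_k$ are computed from $S$ by a fixed polynomial, which by the argument of Theorem~\ref{theorem:inference} can be evaluated over ciphertexts using only $E$, $\oplus$ and $\otimes$; running it on $E(S)$ then returns precisely $E(w^\prime_1),\dots,E(w^\prime_k)$.

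The heart of the argument, and the step I expect to be the main obstacle, is the stability estimate showing that $\max_j|w_j - w^\prime_j| < \epsilon$ once $\delta$ is small enough. The difficulty is that the approximation errors compound across iterations: perturbing a function by $\delta$ perturbs the current state, the perturbed state is then fed into the next function, and so on. I would control this inductively over the finitely many elementary steps, using that each function is uniformly continuous on the compact set $K$: if at some step the input is perturbed by at most $\gamma$ and the applied function is replaced by a $\delta$-close polynomial, the output is perturbed by at most $\delta + \omega(\gamma)$, where $\omega$ is a common modulus of continuity on $K$. Composing these bounds over the fixed number of steps gives a final perturbation bounded by a quantity that depends only on $\delta$ and the moduli of continuity and that tends to $0$ as $\delta\to 0$; choosing $\delta$ small enough then forces the final bound below $\epsilon$. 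The reason the compounding stays under control is precisely that the number of iterations is fixed and finite, so only finitely many error terms accumulate; the compactness of $K$ is what makes each $\omega$ a genuine modulus of continuity and keeps every intermediate evaluation inside the region where the polynomial approximations are valid.
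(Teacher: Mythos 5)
Your proof is correct, but it takes a genuinely different route from the paper's. The paper applies Stone--Weierstrass exactly once, to the entire map $L$: since $L$ is built from additions, multiplications, and compositions of continuous functions, it is itself a continuous function on a compact sample space, hence uniformly approximable by a single polynomial $P$; the algorithm $L^\prime$ is then simply $P$ with its constants encrypted and its arithmetic replaced by $\oplus,\otimes$, and the bound $|w_j - w^\prime_j|<\epsilon$ is immediate from the uniform approximation --- no error-propagation analysis is needed at all. You instead approximate only the four elementary non-polynomial ingredients ($\phi$, $\phi'$, $\ell$, $\ell'$), keep the iterative structure of back-propagation intact, and then pay for this with the inductive stability estimate via moduli of continuity. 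What the paper's argument buys is brevity and the complete avoidance of the compounding-error issue; what it gives up is any connection to the structure of back-propagation --- the approximating polynomial is an opaque function of all $T\cdot n$ sample coordinates whose degree is uncontrolled. Your version is closer to what one would actually implement and to the degree bookkeeping of Section~\ref{sec:practical} (degrees multiplying across layers and accumulating across gradient steps), at the cost of the harder analysis. One point in your stability step is worded slightly circularly: the perturbed iterates need not lie in $K$ itself, so you should fix the approximations on a closed $\eta$-fattening of $K$ (still compact) and let the induction on $\delta$ guarantee the perturbed trajectory stays inside that fattening; as stated, "compactness of $K$ keeps every intermediate evaluation inside the region" assumes what the induction must prove. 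This is a standard and easily repaired point, not a gap in the approach.
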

The proof is very similar to previous proofs and therefore we skip details.
\begin{proof} 
 The learning algorithm $L$ is made of addition, multiplication and compositions of the constants, non-linear transformations, the loss function and their gradients. According to the assumption of this theorem, all these functions are continuous and therefore $L$ is a continuous function over a compact space which can be approximated by a polynomial. The algorithm $L^\prime$ is this polynomial approximation of $L$ after all constants have been replaced by their encrypted versions and additions and multiplications have been replaced by the $\oplus, \otimes$ operations.
\end{proof}

\section{Practical consideration}\label{sec:practical}
In Section~\ref{sec:poly} we have shown that it is possible to learn neural networks over encrypted data and to apply neural networks to encrypted data. However, some scenarios may be infeasible due to excessive computational complexity. In this section we discuss practical considerations in more details.

 While HE schemes allow the evaluation of polynomial functions, these computations are much slower than computations done on plain data. Furthermore, in current implementations of HE, high degree polynomials are slower to compute than lower degree polynomials. The reason for that, in a nut shell, is that as part of the encryption process some random noise is added to the message. When adding two numbers via the $\oplus$ operation, the noise in the resulting ciphertext increases linearly with respect to the number of additions, however, when multiplying, the noise grows super--linearly. For an FHE scheme, when the noise size reaches a certain level, a time consuming cleaning process is performed which slows down the entire process. For HE schemes as the one considered in this work, the parameters of the scheme have to be chosen to accommodate the noise growth incurred by the desired computation. A higher complexity requires larger parameters, which leads to slower execution of the algorithms. Therefore, special considerations should be taken to approximate the neural network with polynomials with the lowest degree possible.

  Let $N$ be a neural network with $l$ layers. If the composition of the activation function and pooling functions in each layer is approximated by a polynomial of degree $d$ then the polynomial approximation of $N$ will be a polynomial of degree $d^l$ since when composing polynomials, the degrees of the polynomials multiply. Therefore, in order to end up with low degree polynomials, we need both $d$ and $l$ to be small. Minimizing $d$, the degree of the polynomial approximation to non-linear functions, is a standard exercise in approximation theory. Tools, such as, Chebyshev polynomials, can be used to find optimal or close to optimal approximations. Even more significant is minimizing the number of layers $l$. This goes against the current trend of learning deep neural networks. However, recent work on model compression ~\citep{buciluǎ2006model,BaCaruana14} show that deep nets can be closely approximated by shallow nets (1-2 hidden layers). These studies suggest that the success of deep nets might be due to better optimization and not necessarily from the kind of function space spanned by deep nets. Therefore, once you have a deep net, you can use it to train a shallow net by labeling a large set of unlabeled instances. This procedure converts deep nets to shallow, but wider, nets. In terms of polynomials, the deep nets convert to high degree polynomials while the shallow but wide nets convert to low degree polynomials with many monomials. Hence this conversions results in polynomials that are faster to execute on encrypted data.

  While inference using crypto-nets may be feasible, learning is a more difficult to scale tasks. Training neural networks is a computational intensive task. Even without encryption, high throughput computing units such as GPUs or multi-node clusters are needed to make learning neural nets feasible on large datasets \citep{dean2012large, coates2013deep}. Furthermore, assuming, as before, that the neural network has $l$ layers such that each layer is approximated by a polynomial of degree $d$ results in the neural network of degree $d^l$. The gradient of this network, with respect to the weight vector, is a polynomial of the same degree. To make gradient step, the gradient of the loss function is computed, if the $L2$ loss function is used then the gradient of the loss function will be  a polynomial of degree $d^{2l}$. However, this term does not take into account the constants of the polynomial which, when learned, are functions of the data from previous iterations. When taking that into account, it is easy to see that the degree of the polynomial is also linear with respect to the number of gradient steps.  Hence, learning from encrypted data in the way proposed here is feasible only for small datasets or for simple models such as linear models.

\section{Discussion}
  In Section~\ref{lemma:polyApprox} we have seen that from a theoretical point of view, it is possible to learn over encrypted data as well as to apply networks to encrypted data. However, in Section~\ref{sec:practical} we have seen that from practical consideration, some applications of crypto-nets are not feasible with the current construction. Therefore, it makes sense to study different use-cases and discuss the theoretical and practical implications of these scenarios.

  Doing inference with crypto-nets is a promising direction. In this scenario, the net is learned over plain data and is applied to encrypted data. For example, consider a dentist that may take X-ray images of suspect tooth and send them to be classified in a cloud service. With crypto-nets, the dentist can encrypt the image and send for evaluation without compromising the privacy of clients since not only the image is encrypted but also the prediction is only visible to the dentist and not to the owner of the predictive models. Another example includes a client that would like to apply for a loan from a bank. Currently, the client has to reveal private financial details to allow the bank to predict the risk associated with the loan. However, with crypto-nets, this can be done without revealing any private information. At the same time, inference over encrypted data is still slower than inference on plain data and hence suitable only in cases where latency and throughput are not major concerns.

  Learning with crypto-nets requires more detailed inspection. We propose three scenarios of learning with encrypted data. 
\begin{enumerate}
\item Assume that a sample is encrypted and the goal is to learn a model from this sample. As discussed in Section~\ref{sec:poly}, the theory suggests that this is possible. However, in practice this is feasible only if the sample is small or the network is shallow.

\item Assume that there are multiple samples, each encrypted with a different key, and the goal is to learn a model by aggregating these datasets. This is the case, for example, if multiple dentists store the medical records of their patients, each dentist using a different key. This scenario is not supported by the kind of homomorphic encryption we discussed so far. However, this could be addressed by secure multi-party computation \citep{du2001secure}. \cite{LTV12} presented a fully homomorphic encryption scheme that allows joint computation over data that was encrypted with different keys. The result would be owned by all parties that contributed data in the sense that decryption requires all data owners who contributed data to the computation to jointly decrypt.

\item Assume that a model has been trained using plain data but users may wish to adapt it to their data. Therefore, the model is already trained and the goal is to perform few gradient steps to fine tune it. This scenario is theoretically feasible and may be practical provided that the data size is small and that the network can be approximated by a polynomial of not-too-high degree.
\end{enumerate}

\section{Conclusion}
In this work we have presented crypto-nets: a way to learn and apply neural networks to encrypted data. We have discussed the theoretical aspects of learning and inferencing over encrypted data as well as the practical implications. We conjecture that for medical and financial applications, crypto-nets may be feasible for the inference stage and maybe even for some limited learning. Implementing crypto-nets require careful work both in the machine learning side and in the cryptology side and is subject of ongoing research.

\bibliography{cryptdnn}

\begin{thebibliography}{24}
\providecommand{\natexlab}[1]{#1}
\providecommand{\url}[1]{\texttt{#1}}
\expandafter\ifx\csname urlstyle\endcsname\relax
  \providecommand{\doi}[1]{doi: #1}\else
  \providecommand{\doi}{doi: \begingroup \urlstyle{rm}\Url}\fi

\bibitem[Ba \& Caruana(2014)Ba and Caruana]{BaCaruana14}
Ba, J. and Caruana, R.
\newblock Do deep nets really need to be deep?
\newblock In \emph{Proceedings of the Neural Information Processing Systems
  (NIPS)}, 2014.

\bibitem[Bos et~al.(2014)Bos, Lauter, and Naehrig]{bos2014private}
Bos, Joppe~W, Lauter, Kristin, and Naehrig, Michael.
\newblock Private predictive analysis on encrypted medical data.
\newblock \emph{Journal of biomedical informatics}, 2014.

\bibitem[Bost et~al.(2014)Bost, Popa, Tu, and Goldwasser]{bostmachine2014}
Bost, Raphael, Popa, Raluca~Ada, Tu, Stephen, and Goldwasser, Shafi.
\newblock Machine learning classification over encrypted data.
\newblock Cryptology ePrint Archive, Report 2014/331, 2014.
\newblock \url{http://eprint.iacr.org/}.

\bibitem[Brakerski \& Vaikuntanathan(2014)Brakerski and
  Vaikuntanathan]{brakerski2014efficient}
Brakerski, Zvika and Vaikuntanathan, Vinod.
\newblock Efficient fully homomorphic encryption from (standard) {LWE}.
\newblock \emph{SIAM Journal on Computing}, 43\penalty0 (2):\penalty0 831--871,
  2014.

\bibitem[Buciluǎ et~al.(2006)Buciluǎ, Caruana, and
  Niculescu-Mizil]{buciluǎ2006model}
Buciluǎ, Cristian, Caruana, Rich, and Niculescu-Mizil, Alexandru.
\newblock Model compression.
\newblock In \emph{Proceedings of the 12th ACM SIGKDD international conference
  on Knowledge discovery and data mining}, pp.\  535--541. ACM, 2006.

\bibitem[Chaudhuri et~al.(2011)Chaudhuri, Monteleoni, and
  Sarwate]{chaudhuri2011differentially}
Chaudhuri, Kamalika, Monteleoni, Claire, and Sarwate, Anand~D.
\newblock Differentially private empirical risk minimization.
\newblock \emph{The Journal of Machine Learning Research}, 12:\penalty0
  1069--1109, 2011.

\bibitem[Coates et~al.(2013)Coates, Huval, Wang, Wu, Catanzaro, and
  Andrew]{coates2013deep}
Coates, Adam, Huval, Brody, Wang, Tao, Wu, David, Catanzaro, Bryan, and Andrew,
  Ng.
\newblock Deep learning with cots hpc systems.
\newblock In \emph{Proceedings of The 30th International Conference on Machine
  Learning}, pp.\  1337--1345, 2013.

\bibitem[Dean et~al.(2012)Dean, Corrado, Monga, Chen, Devin, Mao, Senior,
  Tucker, Yang, Le, et~al.]{dean2012large}
Dean, Jeffrey, Corrado, Greg, Monga, Rajat, Chen, Kai, Devin, Matthieu, Mao,
  Mark, Senior, Andrew, Tucker, Paul, Yang, Ke, Le, Quoc~V, et~al.
\newblock Large scale distributed deep networks.
\newblock In \emph{Advances in Neural Information Processing Systems}, pp.\
  1223--1231, 2012.

\bibitem[Du \& Atallah(2001)Du and Atallah]{du2001secure}
Du, Wenliang and Atallah, Mikhail~J.
\newblock Secure multi-party computation problems and their applications: a
  review and open problems.
\newblock In \emph{Proceedings of the 2001 workshop on New security paradigms},
  pp.\  13--22. ACM, 2001.

\bibitem[Duchi et~al.(2012)Duchi, Jordan, and Wainwright]{Duchi2012privacy}
Duchi, John~C, Jordan, Michael~I, and Wainwright, Martin~J.
\newblock Privacy aware learning.
\newblock In \emph{Advances in Neural Information Processing Systems}, pp.\
  1430--1438, 2012.

\bibitem[Dwork(2008)]{dwork2008differential}
Dwork, Cynthia.
\newblock Differential privacy: A survey of results.
\newblock In \emph{Theory and Applications of Models of Computation}, pp.\
  1--19. Springer, 2008.

\bibitem[Gentry(2009)]{gentry2009fully}
Gentry, Craig.
\newblock Fully homomorphic encryption using ideal lattices.
\newblock In \emph{STOC}, volume~9, pp.\  169--178, 2009.

\bibitem[Graepel et~al.(2013)Graepel, Lauter, and Naehrig]{graepel2013ml}
Graepel, Thore, Lauter, Kristin, and Naehrig, Michael.
\newblock {ML} confidential: Machine learning on encrypted data.
\newblock In \emph{Information Security and Cryptology--ICISC 2012}, pp.\
  1--21. Springer, 2013.

\bibitem[Lauter et~al.(2011)Lauter, Naehrig, and Vaikuntanathan]{naehrig2011ca}
Lauter, Kristin, Naehrig, Michael, and Vaikuntanathan, Vinod.
\newblock Can homomorphic encryption be practical?
\newblock In \emph{Proceedings of the 3rd ACM workshop on Cloud computing
  security workshop}, pp.\  113--124. ACM, 2011.

\bibitem[Lauter et~al.(2014)Lauter, L{\'o}pez-Alt, and Naehrig]{lauter2014}
Lauter, Kristin, L{\'o}pez-Alt, Adriana, and Naehrig, Michael.
\newblock Private computation on encrypted genomic data.
\newblock In \emph{LATINCRYPT 2014}, Lecture Notes in Computer Science.
  Springer, 2014.
\newblock to appear.

\bibitem[{L{\'o}pez-Alt} et~al.(2012){L{\'o}pez-Alt}, Tromer, and
  Vaikuntanathan]{LTV12}
{L{\'o}pez-Alt}, Adriana, Tromer, Eran, and Vaikuntanathan, Vinod.
\newblock On-the-fly multiparty computation on the cloud via multikey fully
  homomorphic encryption.
\newblock In \emph{STOC}, pp.\  1219--1234, 2012.

\bibitem[Nikolaenko et~al.(2013{\natexlab{a}})Nikolaenko, Ioannidis, Weinsberg,
  Joye, Taft, and Boneh]{nikolaenko2013privacy}
Nikolaenko, Valeria, Ioannidis, Stratis, Weinsberg, Udi, Joye, Marc, Taft,
  Nina, and Boneh, Dan.
\newblock Privacy-preserving matrix factorization.
\newblock In \emph{Proceedings of the 2013 ACM SIGSAC conference on Computer \&
  communications security}, pp.\  801--812. ACM, 2013{\natexlab{a}}.

\bibitem[Nikolaenko et~al.(2013{\natexlab{b}})Nikolaenko, Weinsberg, Ioannidis,
  Joye, Boneh, and Taft]{nikolaenko2013privacy2}
Nikolaenko, Valeria, Weinsberg, Udi, Ioannidis, Stratis, Joye, Marc, Boneh,
  Dan, and Taft, Nina.
\newblock Privacy-preserving ridge regression on hundreds of millions of
  records.
\newblock In \emph{Security and Privacy (SP), 2013 IEEE Symposium on}, pp.\
  334--348. IEEE, 2013{\natexlab{b}}.

\bibitem[Orlandi et~al.(2007)Orlandi, Piva, and Barni]{orlandi2007oblivious}
Orlandi, Claudio, Piva, Alessandro, and Barni, Mauro.
\newblock Oblivious neural network computing via homomorphic encryption.
\newblock \emph{EURASIP Journal on Information Security}, 2007:\penalty0 18,
  2007.

\bibitem[Rivest et~al.(1978)Rivest, Adleman, and Dertouzos]{rivest1978data}
Rivest, Ronald~L, Adleman, Len, and Dertouzos, Michael~L.
\newblock On data banks and privacy homomorphisms.
\newblock \emph{Foundations of secure computation}, 4\penalty0 (11):\penalty0
  169--180, 1978.

\bibitem[Smith(2011)]{smith2011privacy}
Smith, Adam.
\newblock Privacy-preserving statistical estimation with optimal convergence
  rates.
\newblock In \emph{Proceedings of the forty-third annual ACM symposium on
  Theory of computing}, pp.\  813--822. ACM, 2011.

\bibitem[Stone(1948)]{Stone1948}
Stone, M.~H.
\newblock The generalized {Weierstrass} approximation theorem.
\newblock \emph{Mathematics Magazine}, 21\penalty0 (4):\penalty0 pp. 167--184,
  1948.
\newblock ISSN 0025570X.
\newblock URL \url{http://www.jstor.org/stable/3029750}.

\bibitem[Wasserman \& Zhou(2010)Wasserman and Zhou]{wasserman2010statistical}
Wasserman, Larry and Zhou, Shuheng.
\newblock A statistical framework for differential privacy.
\newblock \emph{Journal of the American Statistical Association}, 105\penalty0
  (489):\penalty0 375--389, 2010.

\bibitem[Wu \& Haven(2012)Wu and Haven]{wu2012using}
Wu, David and Haven, Jacob.
\newblock Using homomorphic encryption for large scale statistical analysis.
\newblock 2012.

\end{thebibliography}
\bibliographystyle{icml2014}

\end{document}